\documentclass[runningheads]{llncs}

\usepackage{eccv}

\usepackage{eccvabbrv}

\usepackage{graphicx}
\usepackage{booktabs}

\usepackage[accsupp]{axessibility}  

\usepackage{multirow}
\usepackage{algorithm}
\usepackage{algorithmic}
\usepackage{wrapfig}
\usepackage{tabularx}
\usepackage{amsmath}
\DeclareMathOperator*{\argmin}{arg\,min}
\usepackage{enumitem}

\usepackage{booktabs}
\usepackage{multirow}
\usepackage{colortbl}
\usepackage{xcolor}
\usepackage{makecell}
\usepackage{marvosym}

\definecolor{lightgray}{gray}{0.93}
\definecolor{bestcolor}{HTML}{E8F5E9}   

\usepackage{pifont}
\newcommand{\xmark}{{\color{red!70!black}\ding{55}}}

\usepackage{booktabs}
\usepackage{multirow}
\usepackage{colortbl}
\usepackage{xcolor}
\usepackage{makecell}
\usepackage{amssymb}   

\definecolor{lightgray}{gray}{0.93}
\definecolor{teal}{HTML}{00796B}

\newcommand{\sss}[1]{{\scriptsize\textcolor{teal}{\,(#1)}}}

\newcommand{\bd}[1]{\textbf{#1}}

\newcommand{\gn}[1]{{\small\textcolor{teal}{#1}}}

\newcommand{\romark}[1]{\makecell{#1}}

\usepackage{hyperref}

\usepackage{orcidlink}

\begin{document}

\title{MIRROR: Aligning Semantic Relations from Language to Image via Gromov--Wasserstein} 

\titlerunning{MIRROR}



\author{
Hong-Han Wang\thanks{Equal contribution.}\orcidlink{0009-0009-6976-3691}
\and
Yuntao Wang\textsuperscript{*}\orcidlink{0000-0002-1041-0252}
\and
Hu Ding\thanks{Corresponding author.}\orcidlink{0000-0002-1307-6077}
}

\authorrunning{H.-H. Wang et al.}

\institute{
University of Science and Technology of China, Hefei, China\\
\email{hh9999@mail.ustc.edu.cn, wangyuntao@mail.ustc.edu.cn, huding@ustc.edu.cn}\\
}

\maketitle

\begin{abstract}

Multimodal Large Language Models (MLLMs) inherit rich relational priors from their language backbones, yet often fail when asked to apply these relationships in visual contexts. We trace this failure to a structural blind spot: projection-based alignment trains each visual token to carry the right semantics, but never asks whether the relationships between concepts survive the crossing from language to vision. To address this, we propose \textbf{MIRROR} (\textbf{M}apping \textbf{I}nter-concept \textbf{R}elations from language to visual \textbf{R}epresentation via \textbf{O}ptimal-transport-based \textbf{R}egularization), a geometric regularization framework that transfers relational priors from language to vision by exploiting the rich relational structure encoded in language representations. Specifically, we derive a surrogate loss from the proposed \textbf{Semi-Inverse Gromov--Wasserstein (SI-GW)} problem, an inverse geometric problem that aligns visual representations with language-derived relational priors. We show that this formulation admits a unique closed-form solution that prescribes the ideal visual relational structure implied by language geometry and cross-modal coupling. The structure of the formulation also enables efficient computation, making it applicable to long token sequences. Applying SI-GW inside decoder-only Transformers requires careful design. We introduce targeted strategies at the layer, head, and token levels to ensure stable extraction without additional parameters or inference cost. MIRROR improves relational consistency while preserving performance on general vision-language tasks.

\keywords{Alignment \and Multimodal Large Language Model \and Vision-Language Model}

\end{abstract}

\section{Introduction}
\label{sec:intro}

Multimodal Large Language Models (MLLMs) have recently achieved substantial progress in visual understanding. The models such as GPT-4V~\cite{achiam2023gpt}, LLaVA~\cite{liu2024visual}, and Qwen-VL~\cite{bai2025qwen2} can perform complex visual question answering, comprehend scene relationships, and engage in multi-turn visual reasoning. These abilities arise largely from their powerful language model backbones, which encode extensive world knowledge and structured relational reasoning.

Yet, when MLLMs must apply such relational knowledge visually, a challenge emerges (\cref{fig:motivation}). A Large Language Model (LLM) that can textually reason that ``sitting requires the center of gravity above the support surface'' or ``Huskies differ from Malamutes in ear shape and body proportion'' may still fail to determine spatial support relations or confuse fine-grained categories in vision. This observation raises a fundamental question:
{Why do \textbf{semantic relationships}, that is, relational priors  already encoded in language models such as spatial configuration and logical dependency, fail to transfer to visual relation understanding?}


Specifically, mainstream MLLMs establish cross-modal connections 
through projection adapters~\cite{liu2024visual,li2023blip,alayrac2022flamingo} 
optimized under standard language modeling objectives. 
While this ensures that each visual token carries sufficient 
semantic information, it enforces alignment at the level of 
identity rather than relational 
structure~\cite{li2025vistaenhancingvisiontextalignment,masry2025alignvlmbridgingvisionlanguage}, 
as we formally analyze in \cref{sec:method_formulation}.

\begin{wrapfigure}{r}{0.48\linewidth}
  \centering
  \vspace{-20pt}
  \includegraphics[width=\linewidth]{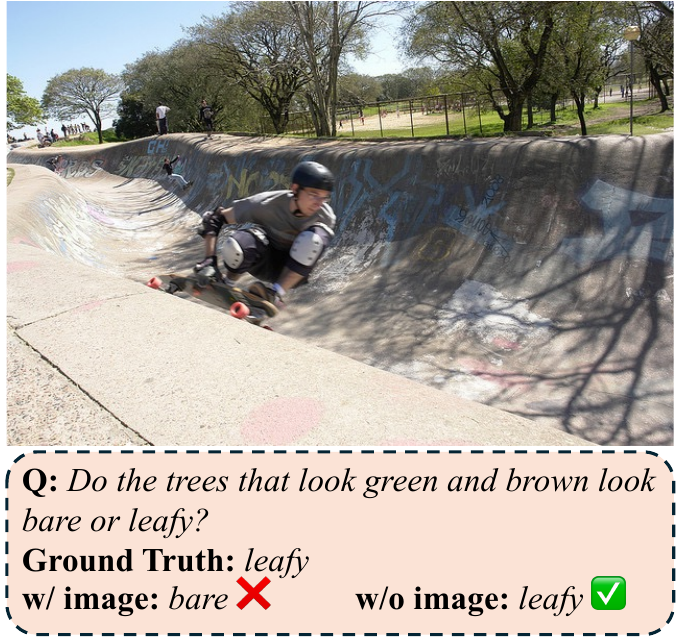}
  \vspace{-15pt}
  \caption{\textbf{Vision can override correct language priors.} A GQA~\cite{hudson2019gqa} example where Qwen2-VL-7B predicts ``leafy'' from ``green and brown'' without the image, but salient bare branches bias it toward an incorrect answer. See Appendix A for more examples.}
  \label{fig:motivation}
  \vspace{-20pt}
\end{wrapfigure}

This observation led us to a key insight: effective cross-modal understanding requires aligning not just what concepts mean, but how they relate to each other, that is, the geometric structure encoding semantic relationships. Recent theoretical works support this perspective. Huh~\cite{huh2024position} proposed the Platonic Representation Hypothesis, suggesting that representations across modalities converge toward shared statistical structures, while other research~\cite{han2025learningseeingdemystifyingllm} shows that large language models spontaneously develop visual reasoning priors from structured text, particularly relational structures between concepts. Together, these findings suggest that text, due to its symbolic and compositional nature, encodes semantic relations in a more structured geometry, making it a valuable ``teacher'' for improving visual relational reasoning.


To model the transfer of relational structure across modalities, we consider 
Optimal Transport (OT)~\cite{villani2009optimal}, a principled framework for comparing probability 
distributions while preserving the geometry of their underlying spaces.  
An important extension is the Gromov--Wasserstein (GW) distance~\cite{memoli2011gromov}, 
which aligns two metric measure spaces 
$(\mathcal{X}, d_{\mathcal{X}}, \mu)$ and $(\mathcal{Y}, d_{\mathcal{Y}}, \nu)$ 
by minimizing the distortion between their pairwise distance structures under 
a joint coupling.  
For discrete distributions $\mu$ and $\nu$ supported on 
$n_t$ text and $n_v$ visual tokens respectively, the admissible coupling set is

\[
\Pi(\mu,\nu)=
\{C \in \mathbb{R}_+^{n_t \times n_v} \mid 
C \mathbf{1}_{n_v} = \mu,\; 
C^\top \mathbf{1}_{n_t} = \nu\},
\]
where $C$ is a joint probability matrix whose row and column marginals match $\mu$ and $\nu$.
The squared GW distance is then defined as
\begin{equation}
\mathrm{GW}^2(\mu,\nu)
=
\min_{C \in \Pi(\mu,\nu)}
\sum_{i,j=1}^{n_t}
\sum_{k,\ell=1}^{n_v}
\bigl| d_\mathcal{X}(i,j) - d_\mathcal{Y}(k,\ell) \bigr|^2 
\, C_{ik} C_{j\ell}.
\label{eq:gw_distance}
\end{equation}
We denote $D^X = (d_\mathcal{X}(i,j))_{n_t \times n_t}$ and 
$D^Y = (d_\mathcal{Y}(k,\ell))_{n_v\times n_v}$ as the pairwise distance matrices
on $\mathcal{X}$ and $\mathcal{Y}$.

While GW provides a principled measure of structural discrepancy, the problem we face in MLLMs is fundamentally different from the one GW solves.
Standard GW takes two fixed metric spaces and searches for an optimal coupling between them.
In our setting, however, the cross-attention mechanism of the MLLM already provides a natural, data-dependent coupling between text and visual tokens at no extra cost. What is missing is not the correspondence, but rather a precise specification of what relational structure the visual space should exhibit given the well-organized language geometry.
This reframes cross-modal relational alignment as an \textbf{inverse geometric problem}:

\textit{Given the text geometry $D^t$ and the coupling $C$ supplied by cross-attention, what is the ideal visual geometry $\widehat{D}^v$ that would make the two spaces relationally consistent under $C$?}

We formalize this question as the \textbf{Semi-Inverse Gromov--Wasserstein (SI-GW)} problem, which optimizes over the visual distance matrix $D^v$ within the GW objective while anchoring both $D^t$ and $C$.
We prove that this inverse problem admits a unique closed-form solution,  which maps text-space relational structure onto the visual token space through the cross-attention coupling.
Moreover, directly optimizing a visual geometry to preserve pairwise relational consistency requires jointly reasoning over all pairs of text tokens and all pairs of visual tokens, incurring a quartic cost that is prohibitive for the long token sequences typical in MLLMs.
We show that the structure of the inverse problem can be carefully exploited, so that its solution can be decomposed into a sequence of basic matrix operations, reducing the cost of computing the objective of SI-GW loss from $O(n_t^2 n_v^2)$ to $O(n_t^2 n_v + n_v^2 n_t)$, where $n_t$ and $n_v$ denote the numbers of text and vision tokens, respectively.

Deploying SI-GW within large-scale MLLMs is not straightforward, however, as the fidelity of its output depends critically on the quality of both the anchored text geometry and the coupling between modalities. Naively sourcing these from arbitrary Transformer layers and heads may introduce noise and gradient instability. We therefore design a set of principled implementation strategies, including layer decoupling, selective head aggregation, and token suppression, that ensure each component of SI-GW receives clean, semantically meaningful input from the Transformer architecture.

The resulting framework, \textbf{MIRROR}, \textbf{M}apping \textbf{I}nter-concept \textbf{R}elations from language to visual \textbf{R}epresentation via \textbf{O}ptimal-transport-based \textbf{R}egularization,
introduces no additional parameters or inference cost. Moreover, the experiments across relational and general vision-language
benchmarks show that MIRROR 
consistently improves inter-concept reasoning while 
maintaining promising performance on general vision-language tasks.

\vspace{-10pt}
\section{Related Work}
\label{sec:related}
\vspace{-5pt}
\paragraph{Multimodal Large Language Models and Cross-Modal Alignment.}
Recent multimodal large language models 
(MLLMs)~\cite{zhu2023minigpt,dai2024instructblip,liu2024visual,bai2025qwen2,lin2024vila,yao2024minicpm} 
integrate large language 
models~\cite{touvron2023llama,jiang2023mistral,chiang2023vicuna} 
with vision encoders~\cite{radford2021learning,li2022grounded} 
through learnable projection 
modules~\cite{liu2024visual,li2023blip, chen2024sharegpt4v}. 
While this adapter-based paradigm effectively maps visual 
tokens into the language embedding space, the alignment is 
typically driven by next-token prediction alone, which 
supervises each visual token independently without 
explicitly preserving inter-concept relational 
structure~\cite{li2025vistaenhancingvisiontextalignment, masry2025alignvlmbridgingvisionlanguage}. 
Meanwhile, emerging evidence suggests that 
representations across modalities converge toward shared 
geometric structures~\cite{huh2024position}, and that LLMs 
develop structured visual reasoning priors from relational 
patterns in text~\cite{han2025learningseeingdemystifyingllm}. 
These observations motivate our approach of aligning 
relational geometry, rather than token-level semantics 
alone, between the two modalities.

\vspace{-10pt}
\paragraph{Optimal transport and Gromov--Wasserstein Distance.}
OT is a classic topic in machine learning~\cite{ruschendorf1985wasserstein}. 
Cuturi’s entropically regularized Sinkhorn distance enables parallel and significantly faster approximations of discrete Wasserstein computation~\cite{cuturi2013sinkhorn}, inspiring a series of improved Sinkhorn‑type algorithms in subsequent work~\cite{lin2019efficient,altschuler2019massively,benamou2015iterative,altschuler2017near}.
An important variant of OT is the Gromov--Wasserstein (GW) 
distance~\cite{memoli2011gromov}, rooted in the 
Gromov--Hausdorff framework~\cite{gromov1999metric} and 
compares metric measure spaces by minimizing the 
distortion of pairwise distance structures under an 
optimal coupling, without requiring point-to-point 
correspondence. Efficient solvers include 
Frank--Wolfe 
methods~\cite{ferradans2014regularized,flamary2021pot}, 
entropic 
regularization~\cite{SolPeyKim16,sejourne2021unbalanced,rioux2024entropic}, 
and semidefinite 
relaxations~\cite{chen2024semidefinite}. 
Recent work has further improved GW along complementary directions, including scalable data-dependent optimization for large-scale GW computation and robust variants for structural noise~\cite{song2026lobcdgw,cheng2026achieving}.

\vspace{-10pt}
\section{Methodology: MIRROR via Semi-Inverse Gromov--Wasserstein}
\label{sec:method}
\label{sec:method_formulation}
\vspace{-5pt}
\begin{figure*}[tb]
  \centering
  \includegraphics[width=0.9\linewidth]{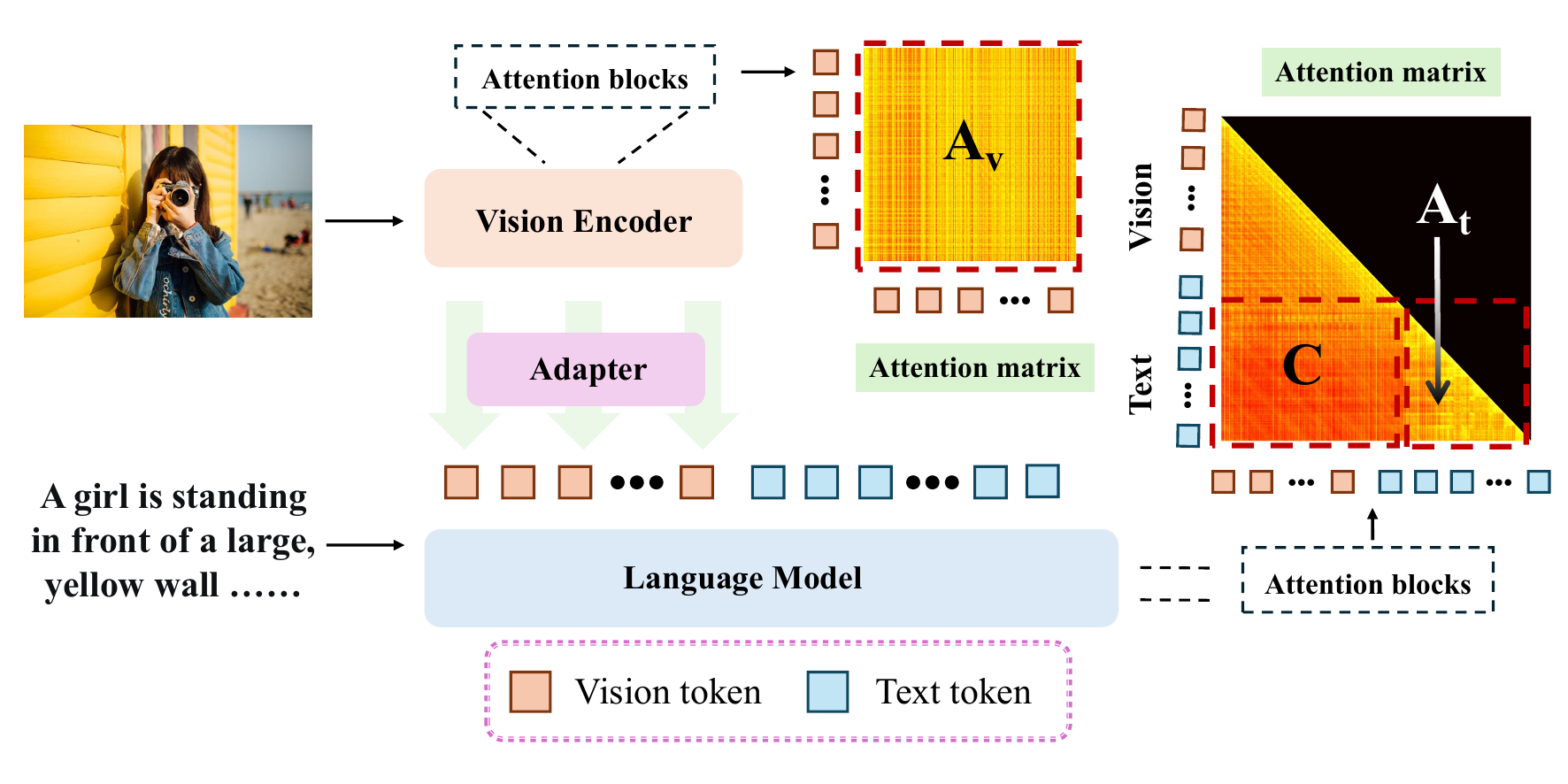}
  \vspace{-10pt}
  \caption{
    Overview of the MLLM architecture.
    An image is encoded into visual tokens via a vision encoder and adapter, then concatenated with text tokens and fed into the language model.
    $A_v$ denotes visual self-attention from the vision encoder, $C$ the cross-modal attention from text to visual tokens, and $A_t$ the causal text self-attention within the language model.
  }
  \label{fig:main_overview}
  \vspace{-18pt}
\end{figure*}

As illustrated in \cref{fig:main_overview}, a standard MLLM maps visual tokens into the LLM input space via a projection adapter $f:\mathcal{V}\rightarrow\mathbb{R}^d$ and is trained with the autoregressive objective 
\begin{equation}
\mathcal{L}_{\text{LM}}
=
-\sum_{t} \log p\!\left(w_t \mid f(v_1),\ldots,f(v_{n_v}),\, w_{<t}\right),
\label{eq:lm_objective}
\end{equation}
where $\{v_i\}_{i=1}^{n_v}$ are visual tokens from the vision encoder and $\{w_t\}$ are text tokens in the target sequence.
Optimizing \cref{eq:lm_objective} encourages identity-level alignment: each visual token is trained to be individually predictable from language context.
However, as shown in \cref{fig:motivation}, many MLLM failures on relational reasoning stem not from missing concept identities, but from missing relational structure.

We capture this intuition through a relational consistency requirement:
relational geometry should be preserved across modalities. Specifically, pairs of
concepts that are close (or far) in language space should exhibit a consistent
relational configuration in visual space.
Crucially, \cref{eq:lm_objective} provides no explicit supervision for such
relational consistency. Even when individual concepts transfer correctly,
their pairwise relations may remain geometrically inconsistent across modalities.

This observation motivates \textbf{MIRROR}, a geometric regularization framework that transfers relational priors from language representations to vision representations in MLLMs.
We derive the \textbf{Semi-Inverse Gromov--Wasserstein (SI-GW)} loss by posing and solving an inverse geometric alignment problem within the MLLM architecture (\cref{sec:sigw}),
describe three instantiation strategies that make SI-GW loss stable at scale (\cref{sec:hierarchical_design}),
and present the overall training procedure (\cref{sec:overall_objective}).

\vspace{-10pt}
\subsection{Semi-Inverse Gromov--Wasserstein (SI-GW) Loss}
\label{sec:sigw}

Enforcing the relational consistency requirement requires answering three questions:
\emph{(i)} how to measure relational geometry within each modality,
\emph{(ii)} how to establish correspondences between visual and text tokens, and
\emph{(iii)} given the language geometry and the correspondence, what the visual geometry \emph{should} look like. To address those problem, MIRROR augments the standard generation loss with a geometric regularization term:
\vspace{-5pt}
\begin{equation}
\mathcal{L}_{\text{total}}
=
\mathcal{L}_{\text{LM}}
+
\lambda\,\mathcal{L}_{\text{SI-GW}},
\label{eq:total_loss}
\vspace{-5pt}
\end{equation}
where $\lambda>0$ controls the strength of structural alignment.
The derivation of $\mathcal{L}_{\text{SI-GW}}$ is  divided in four steps:
\textit{Steps~1} and \textit{2} construct the necessary geometric ingredients from the MLLM's own attention maps.
The key contribution comes in \textit{Step~3}, where we pose the \emph{semi-inverse GW problem}, which asks what the ideal visual geometry is, and prove that it admits a closed-form solution.
\textit{Step~4} turns this solution into an efficient training loss. Below we elaborate on those steps.

\vspace{-7pt}
\paragraph{Step 1: Intra-modal geometry.}
Since relational alignment relies on pairwise structural relations, the primary requirement is an intra-modality distance metric, for which self-attention is a natural proxy: attention in language models is known to encode inter-token relations, from syntactic dependencies to knowledge-graph links~\cite{ren-etal-2024-identifying}, so high mutual attention between two tokens can be read as semantic proximity and low attention as distance.
Let $A_t\!\in\!\mathbb{R}_+^{n_t\times n_t}$ and $A_v\!\in\!\mathbb{R}_+^{n_v\times n_v}$ denote self-attention matrices extracted from the text branch and the vision encoder, respectively  (\cref{fig:main_overview}).
Raw self-attention is generally asymmetric and concentrated near zero, so we symmetrize via $A \leftarrow (A + A^\top)/2$ and apply a negative log-transform to amplify differences:

\vspace{-5pt}
\begin{equation}
D^t = -\log(A_t+\varepsilon),
\qquad
D^v = -\log(A_v+\varepsilon),
\label{eq:d_t_v}
\end{equation}
where $\varepsilon\in\mathbb R_+$ is a sufficient small amount that prevents numerical instability. Therefore, we obtain text-wise discrepancy $d_t(i,j)=D^t_{i,j},$ and vision-wise discrepancy $d_v(i,j)=D^v_{i,j}.$
Under this monotone transformation, semantically related token pair yields small discrepancy, while unrelated pair is mapped to large discrepancy.

These discrepancies quantify geometry {within} each modality, but enforcing relational consistency requires comparing across modalities, specifically, knowing which visual token corresponds to which text token.

\vspace{-7pt}
\paragraph{Step 2: Cross-modal correspondence.}
Comparing two geometries defined on different index sets requires a coupling---a soft assignment that maps tokens in one space to tokens in the other.
In our setting, such a correspondence need not be estimated externally: the text-to-vision cross-attention at an intermediate LLM layer already provides a data-dependent soft assignment.
Concretely, let $C\!\in\!\mathbb{R}^{n_t\times n_v}$ denote the (aggregated) cross-attention matrix, whose entry $C_{ij}$ measures how strongly text token $i$ attends to visual token $j$.

With the language geometry $D^t$, the visual geometry $D^v$, and the coupling $C$ all in hand, the relational consistency requirement can now be stated precisely: \emph{the visual distance structure $D^v$ should be consistent with $D^t$ under the correspondence $C$}.
The question that remains is what ``consistent'' means quantitatively, so that $\mathcal{L}_{\text{SI-GW}}$ can be derived.

\vspace{-7pt}
\paragraph{Step 3: The semi-inverse GW problem.}
We formalize the consistency requirement above as an optimization problem.
Treating the language geometry $D^t$ as a fixed teacher and the coupling $C$ as a frozen correspondence, we ask: \emph{what is the visual geometry $D^v$ that best preserves the relational structure of $D^t$ under $C$?}
We call this the \textbf{semi-inverse GW problem}:
\vspace{-5pt}
\begin{equation}
\widehat{D}^v
\;=\;
\argmin_{D^v \in \mathbb{R}^{n_v\times n_v}}
\sum_{i,j=1}^{n_t}\sum_{k,\ell=1}^{n_v}
\bigl|d_t(i,j)-d_v(k,\ell)\bigr|^2\, C_{ik}\,C_{j\ell}.
\label{eq:sigw_problem}
\vspace{-5pt}
\end{equation}
The objective is derived from the Gromov--Wasserstein problem (\cref{eq:gw_distance}) in which both $D^t$ and $C$ are held fixed and only $D^v$ is optimized. The name ``semi-inverse'' is because the roles of the two spaces are asymmetric: the language side specifies the target structure, and the visual side must conform.
 
\begin{theorem}[Closed-form solution of the semi-inverse GW problem]
\label{thm:inverse_gw}
Let $D^t\!\in\!\mathbb{R}^{n_t \times n_t}$ and $C\!\in\!\mathbb{R}^{n_t \times n_v}$ with strictly positive column marginals $b = C^\top\mathbf{1}_{n_t}\!>\!0$.
The semi-inverse GW problem (\cref{eq:sigw_problem}) has the unique minimizer
\begin{equation}
\widehat{D}^v
=
\bigl(C^\top D^t\, C\bigr)\oslash \bigl(b\,b^\top\bigr),
\label{eq:dhat}
\end{equation}
where $\oslash$ denotes entrywise division.
\end{theorem}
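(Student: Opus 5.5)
The objective in \cref{eq:sigw_problem} decouples completely across the entries of $D^v$, so I would argue entry by entry. Fix a pair $(k,\ell)$ and collect every term of the double sum that involves $d_v(k,\ell)=x$. This gives
\[
F_{k\ell}(x)=\sum_{i,j=1}^{n_t}\bigl(D^t_{ij}-x\bigr)^2 C_{ik}C_{j\ell}.
\]
The total objective is $\sum_{k,\ell}F_{k\ell}(D^v_{k\ell})$. Because $D^v$ ranges over all of $\mathbb{R}^{n_v\times n_v}$, with no symmetry or metric constraint imposed, each $F_{k\ell}$ can be minimized independently. The global minimizer is then unique if and only if each scalar problem has a unique minimizer.

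Next I would expand $F_{k\ell}$ as a quadratic in $x$:
\[
F_{k\ell}(x)=x^2 S_0 - 2x S_1 + S_2,
\]
with coefficients
\[
S_0=\sum_{i,j}C_{ik}C_{j\ell}=b_k b_\ell,\qquad
S_1=\sum_{i,j}C_{ik}D^t_{ij}C_{j\ell}=(C^\top D^t C)_{k\ell}.
\]
The constant $S_2$ does not matter for the minimizer. The hypothesis $b>0$ gives $S_0=b_kb_\ell>0$, so $F_{k\ell}$ is strictly convex. Setting the derivative to zero yields the unique stationary point $x^\ast=S_1/S_0$, which is exactly the $(k,\ell)$ entry of $(C^\top D^tC)\oslash(bb^\top)$. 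Assembling these entries gives \cref{eq:dhat}.

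The only delicate point is the sign of the weights, which strict convexity needs. Here $S_0=b_kb_\ell$ is positive even if individual entries of $C$ could be negative. In the intended setting, though, $C$ is nonnegative, and then $F_{k\ell}$ is a nonnegatively weighted sum of squares with positive total weight, so strict convexity is immediate. Uniqueness follows from strict convexity of the separable sum, since each summand is strictly convex in its own coordinate. I would also remark that if $D^t$ is symmetric, then $\widehat{D}^v$ is automatically symmetric, so imposing symmetry would not change the solution.
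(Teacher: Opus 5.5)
Your proposal is correct and takes essentially the same route as the paper's proof. Both decouple the objective entrywise into weighted least-squares problems in $d_v(k,\ell)$, obtain $(C^\top D^t C)_{k\ell}/(b_k b_\ell)$ from the first-order condition, and get uniqueness from strict convexity via the leading coefficient $b_k b_\ell>0$. Your observation that this requires only $b>0$, not entrywise nonnegativity of $C$, is a correct small sharpening of the paper's appeal to strict convexity.
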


\begin{proof}[Sketch]
Expanding \cref{eq:sigw_problem} and collecting terms in $d_v(k,\ell)$ yields a  weighted least-squares problem in each entry.
The first-order optimality condition gives $\widehat{D}^v_{k,\ell} = \frac{(C^\top D^t\, C)_{k\ell}}{b_k b_\ell}$, which is the unique minimizer since the objective is strictly convex in $D^v$.
The full derivation is provided in Appendix~B. \qed
\end{proof}
\textbf{How to interpret $\widehat{D}^v$? } 
$\widehat{D}^v$ serves as the coupling-induced {target vision distance matrix}.
This target admits an intuitive interpretation as a conditional expectation. For any fixed visual token pair $(k,\ell)$, we use the coupling weights to induce a probability distribution over text token pairs $(i,j)$:
\vspace{-5pt}
\begin{equation}
p(i,j \mid k,\ell)
\;\triangleq\;
\frac{C_{ik}\,C_{j\ell}}{b_k\,b_\ell},
\qquad
\sum_{i=1}^{n_t}\sum_{j=1}^{n_t} p(i,j\mid k,\ell)=1,
\label{eq:cond_pair}
\vspace{-5pt}
\end{equation}
where the normalization condition follows from $\sum_i C_{ik}=b_k$ and $\sum_j C_{j\ell}=b_\ell$.
Under this conditional distribution, $\widehat{D}^v$ admits the following expectation form:
\vspace{-5pt}
\begin{equation}
\widehat{D}^v_{k,\ell}
=
\sum_{i=1}^{n_t}\sum_{j=1}^{n_t}
d_t(i,j)\,p(i,j\mid k,\ell)
=
\mathbb{E}_{(i,j)\sim p(\cdot,\cdot\mid k,\ell)}\!\left[d_t(i,j)\right],
\label{eq:ctg_expectation}
\vspace{-5pt}
\end{equation}
\ie, the entry $\widehat{D}^v_{k,\ell}$ is the expected text-space distance between the text tokens that attend to visual tokens $k$ and $\ell$, respectively.
In other words, $\widehat{D}^v$ describes the geometric configuration of visual representation that would emerge if all language relational structure were faithfully inherited through the coupling, which is the precise answer to the question posed at the beginning of this subsection.





\vspace{-7pt}
\paragraph{Step 4: Training loss.}
With $\widehat{D}^v$ as a concrete target, we define the SI-GW loss as the squared Frobenius discrepancy between the current visual geometry and the ideal one:
\begin{definition}[Semi-Inverse Gromov--Wasserstein loss]
\label{def:sigw}
\begin{equation}
\boxed{
\mathcal{L}_{\text{SI-GW}}
=
\left\|D^v - \widehat{D}^v\right\|_F^2
=
\left\| D^v - \frac{C^\top D^t\, C}{\,b\,b^\top\,}\right\|_F^2.
}
\label{eq:sigw_loss}
\end{equation}
\end{definition}

\noindent By \cref{thm:inverse_gw}, minimizing $\mathcal{L}_{\text{SI-GW}}$ over $D^v$ has the same effect as minimizing the quartic semi-inverse objective in \cref{eq:sigw_problem}
The critical advantage lies in computational efficiency: whereas direct evaluation of \cref{eq:sigw_problem} requires $O(n_t^2 n_v^2)$ operations from the sum over all pairs of token pairs, the closed-form reduction to \cref{eq:sigw_loss} admits $O(n_t^2 n_v + n_v^2 n_t)$ evaluation.
Specifically, in \cref{eq:sigw_loss}, computing the denominator  term $b\!=\!C^\top\mathbf{1}_{n_t}$ costs $O(n_t n_v)$; the dominant term $C^\top D^t\, C$ requires $O(n_t^2 n_v + n_v^2 n_t)$ via two matrix multiplications; and the entrywise division and Frobenius norm each cost $O(n_v^2)$.

\vspace{-10pt}
\subsection{Extracting Stable Geometric Ingredients}
\label{sec:hierarchical_design}


Computing the SI-GW loss (\cref{eq:sigw_loss}) requires extracting
$D^t$, $D^v$, and $C$ from the multi-layer, multi-head Transformer.
Straightforward extraction, e.g., from a single layer
with uniform head averaging,
yields noisy couplings and unstable gradients in practice
(see \cref{sec:analysis} for empirical evidence).
As shown in \cref{fig:extraction_strategies}, we introduce three complementary strategies, each targeting a specific source of instability; the systematic ablations are reported in \cref{sec:analysis}.

\vspace{-10pt}
\subsubsection{Layer Decoupling for Stable Geometry Transfer}
\label{sec:layer_selection}

\begin{wrapfigure}{r}{0.6\textwidth}
  \centering
  \vspace{-12pt}
  \includegraphics[width=\linewidth]{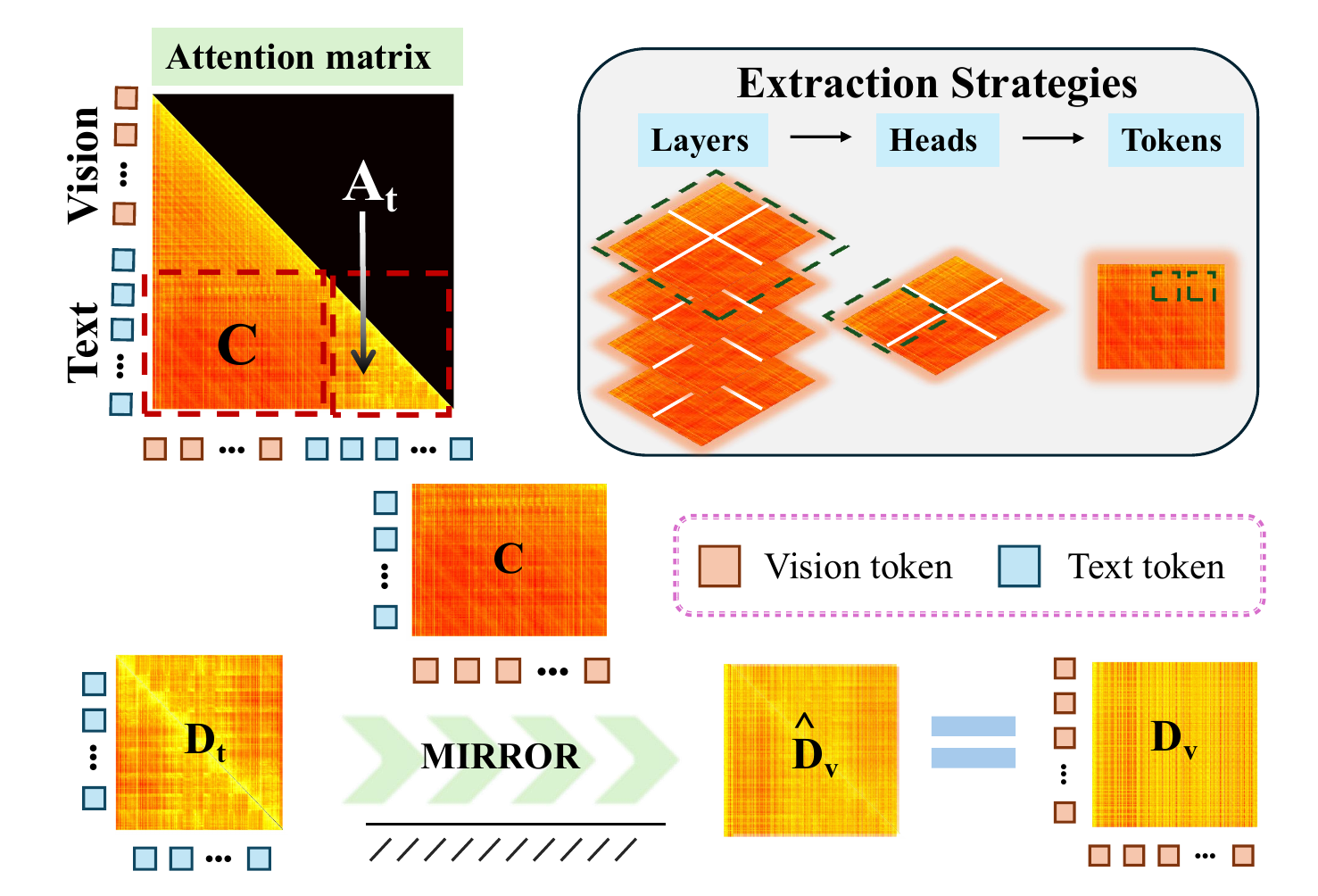}
  \caption{
    MIRROR extraction and alignment.
    $A_t$ and $C$ are extracted from separate LLM layers, and $A_v$ from the final ViT layer, with selection applied at the layer, head, and token levels.
    MIRROR constructs the target geometry $\widehat{D}^v$ from $D^t$ and $C$, and drives $D^v$ toward it via the SI-GW loss.
  }
  \label{fig:extraction_strategies}
  \vspace{-12pt}
\end{wrapfigure}

In decoder-only MLLMs, $A_t$ and $C$ are sub-matrices of
the same causal attention map, jointly normalized by a single
softmax. Extracting both from the same layer couples their
gradients and destabilizes training.
We resolve this by \emph{layer decoupling}, extracting
each ingredient from a separate stage best suited to its role:
$A_t$ is extracted from an early LLM layer, where textual relational
geometry is relatively pure before strong multimodal mixing;
$A_v$ from the final ViT layer, which most directly governs
downstream visual reasoning;
and $C$ from an intermediate LLM layer, where representations
balance semantic richness and
concentration~\cite{tao2024probing,viswanathan2025geometrytokensinternalrepresentations}.

In practice, the full causal attention map at each LLM layer
spans all token types. We separate the text--text and
text--vision sub-blocks via the padding mask to yield $A_t$
and the raw cross-attention, respectively.

\vspace{-10pt}
\subsubsection{Selective Head Aggregation for Reliable Coupling}
\label{sec:head_selection}
At the coupling layer, multi-head attention yields $H$
cross-attention maps
$\{A^h\}_{h=1}^{H}$.
Prior work has shown that attention heads exhibit highly
uneven importance~\cite{voita2020analyzing,michel2019sixteen};
many produce near-uniform distributions that dilute sharp
correspondences when naively averaged.
To identify the most informative heads, we compute the entropy
$\mathcal{H}_h$ of each head $h$'s cross-attention map $A^h$:
\begin{equation}
\mathcal{H}_h = -\sum_{i,j} A^h_{ij}\log A^h_{ij},
\label{eq:head_entropy}
\end{equation}
where lower entropy indicates a more concentrated, and thus
more discriminative, cross-modal assignment.
We retain the top-$k$ lowest-entropy heads and form the coupling
by averaging over them:
\vspace{-5pt}
\begin{equation}
C = \frac{1}{k}\sum_{h\in \mathcal{S}_k} A^h,
\qquad
\mathcal{S}_k
= \bigl\{h : \mathcal{H}_h \text{ is among the } k
\text{ smallest}\bigr\}.
\label{eq:head_agg}
\vspace{-5pt}
\end{equation}

\vspace{-5pt}
\subsubsection{Non-semantic Token Filtering for Cleaner Text Geometry}
\label{sec:token_filtering}

The text stream contains tokens that carry no visual
semantics, including system prompts, image placeholders
(\eg, \texttt{<image>}), and formatting markers.
These tokens appear in $A_t$ yet do not participate in
meaningful relational structure, introducing noise into
the derived distance matrix $D^t$.
To mitigate this, we apply a soft, position-aware suppression.
For each text token $j$, we define a weight
$w_j \in [0,1]$ based on its position and its coupling mass
$\sum_k C_{jk}$: tokens that appear early in the sequence
(where system prompts typically reside) and attend weakly
to visual tokens receive large $w_j$.
The attention entries are then attenuated as:
\vspace{-5pt}
\begin{equation}
\widetilde{A}_t(i,j)
=
\bigl(1 - \alpha \cdot w_j\bigr)\, A_t(i,j),
\label{eq:token_suppress}
\vspace{-5pt}
\end{equation}
where $\alpha \in [0,1]$ controls the overall suppression
strength, followed by row-wise re-normalization before
computing $D^t$ via \cref{eq:d_t_v}.
Compared with hard removal, this soft filtering preserves
potentially useful context while reducing the influence of
non-semantic tokens.
Details on the weighting scheme are provided in Appendix~C.

\vspace{-10pt}
\subsection{Training Procedure}
\label{sec:overall_objective}

\renewcommand{\algorithmicrequire}{\textbf{Input:}}
\renewcommand{\algorithmicensure}{\textbf{Output:}}

\begin{algorithm}[t]
\caption{MIRROR Training}
\label{alg:MIRROR}
\begin{algorithmic}[1]
\REQUIRE Training data $\mathcal{D}$, MLLM parameters $\theta$, SI-GW weight $\lambda$, top-$k$ heads, stability constant $\varepsilon$, learning rate $\eta$
\ENSURE Fine-tuned parameters $\theta^*$
\FOR{each mini-batch $(\mathbf{x}, \mathbf{y}) \sim \mathcal{D}$}
    \STATE Perform forward pass through ViT and LLM
    \STATE {\color{cyan!70!blue}\textit{// Decoupled attention extraction}}
    \STATE Extract $A_v$ (final ViT), $A_t$ (early LLM), $\{A^h\}_{h=1}^{H}$ (intermediate LLM)
    \STATE Symmetrize: $A_v \gets (A_v + A_v^\top)/2$, \; $A_t \gets (A_t + A_t^\top)/2$
    \STATE {\color{cyan!70!blue}\textit{// Entropy-based head selection and coupling}}
    \STATE Compute per-head entropy: $\mathcal{H}_h \gets -\sum_{i,j} A^h_{ij} \log A^h_{ij}$
    \STATE Select $\mathcal{S}_k \gets \{h : \mathcal{H}_h \text{ is among the } k \text{ smallest}\}$
    \STATE Compute coupling matrix: $C \gets \frac{1}{k}\sum_{h \in \mathcal{S}_k} A^h$
    \STATE {\color{cyan!70!blue}\textit{// Non-semantic token filtering}}
    \STATE Filter non-semantic tokens in $A_t$ to obtain $\tilde{A}_t$ \quad (\cref{eq:token_suppress})
    \STATE {\color{cyan!70!blue}\textit{// Semi-Inverse Gromov--Wasserstein loss}}
    \STATE $D^t \gets -\log(\tilde{A}_t + \varepsilon)$, \quad $D^v \gets -\log(A_v + \varepsilon)$
    \STATE $\mathbf{b} \gets C^\top \mathbf{1}_{n_t}$, \quad $\widehat{D}^v \gets (C^\top D^t\, C) \oslash (\mathbf{b}\,\mathbf{b}^\top)$
    \STATE $\mathcal{L}_{\mathrm{SI\text{-}GW}} \gets \lVert D^v - \widehat{D}^v \rVert_F^2$
    \STATE {\color{cyan!70!blue}\textit{// Parameter update}}
    \STATE $\mathcal{L}_{\mathrm{total}} \gets \mathcal{L}_{\mathrm{LM}} + \lambda \cdot \mathcal{L}_{\mathrm{SI\text{-}GW}}$
    \STATE $\theta \gets \theta - \eta\,\nabla_\theta \mathcal{L}_{\mathrm{total}}$
\ENDFOR
\RETURN $\theta^* \gets \theta$
\end{algorithmic}
\end{algorithm}

The complete MIRROR training procedure is summarized in \cref{alg:MIRROR}.
At each training step, the standard forward pass through the ViT and LLM produces all required attention matrices (line~2).
Layer decoupling, head selection, and token suppression (\cref{sec:hierarchical_design}) are then applied to extract clean $D^t$, $D^v$, and $C$ (lines~3--6), from which $\widehat{D}^v$ and the SI-GW loss are computed (lines~7--10).
The combined objective (\cref{eq:total_loss}) drives parameter updates (line~11).

\vspace{-10pt}
\paragraph{No additional inference overhead.}
MIRROR introduces no additional parameters and inference cost. 
All SI-GW loss computations use attention matrices already produced during the forward pass and are active only during training.
At test time, the model architecture and computational cost remain identical to the baseline MLLM.

\vspace{-10pt}
\section{Experiments}
\label{sec:experiment}
\vspace{-5pt}
We first describe the implementation details (\cref{sec:impl_details}) and evaluation benchmarks (\cref{sec:benchmarks}), then present main results (\cref{sec:main_results}) followed by ablation and design analysis (\cref{sec:analysis}).
\vspace{-10pt}
\subsection{Implementation Details}
\label{sec:impl_details}
 \vspace{-5pt}
\noindent\textbf{Model and Training Configuration.}
We conduct experiments on two representative MLLM families, LLaVA-1.5~\cite{liu2024improved} and LLaVA-NeXT~\cite{liu2024llavanext}, each at the 7B and 13B scales. Both families are selected for (1) fully open-sourced training data that enables rigorous reproduction and evaluation, and (2) clean architectural designs that eliminate confounding factors from complex structures. For each model, we perform MIRROR fine-tuning for 1 epoch on 8 NVIDIA RTX 6000 Ada GPUs (48GB), following the original training protocol of each family. All evaluations use greedy decoding, and baseline results are reproduced from official checkpoints.

\noindent\textbf{MIRROR Hyperparameter Settings.}
The SI-GW loss weight $\lambda$ in \cref{eq:total_loss} is set
to 0.002. For layer decoupling (\cref{sec:layer_selection}),
we extract the text self-attention from Layer~1 of the LLM,
the visual self-attention from the final vision encoder layer,
and the coupling matrix $C$ from Layer~16. For head selection
(\cref{eq:head_agg}), we retain the $k{=}8$ lowest-entropy heads. For non-semantic token filtering
(\cref{eq:token_suppress}), we set the initial attenuation strength $\alpha=0.5$.
\vspace{-10pt}
\subsection{Evaluation Benchmarks}
\label{sec:benchmarks}
\vspace{-5pt}
We evaluate our model on two categories of tasks:

\noindent\textbf{Relational Reasoning Tasks.}
GQA~\cite{hudson2019gqa} is a large-scale scene graph reasoning dataset requiring multi-hop reasoning and spatial relation understanding. It provides fine-grained annotations across five semantic categories: \textit{Attribute}, \textit{Category}, \textit{Global}, \textit{Object}, and \textit{Relation}.
BLINK~\cite{fu2024blink} evaluates fine-grained visual perception and multimodal reasoning, spanning three difficulty levels: \textit{low-level} pattern matching (visual correspondence, reflectance), \textit{mid-level} spatial reasoning (spatial relations, multi-view, jigsaw), and \textit{high-level} understanding (localization, counting, forensics, similarity).

\noindent\textbf{General Vision-Language Tasks.}
VQAv2~\cite{balanced_vqa_v2} is a widely-used benchmark covering diverse everyday scenarios.
POPE~\cite{Li-hallucination-2023} evaluates hallucination issues by testing object existence judgment.
RealWorldQA~\cite{grok15v2025} contains real-world commonsense reasoning questions.

\vspace{-10pt}
\subsection{Main Results}
\label{sec:main_results}
Our main experiments address three questions:

\noindent\textbf{RQ1.} Can MIRROR effectively enhance relational structure understanding in multimodal models? 

\noindent\textbf{RQ2.} Does it generalize across reasoning levels, from spatial to semantic relations? 

\noindent\textbf{RQ3.} Can these benefits be achieved without degrading general vision-language performance? 


\begin{table}[tb]
  \caption{\textbf{Structured relational reasoning on GQA.}
    \colorbox{lightgray}{\strut Shaded rows} denote MIRROR-enhanced models.
    \textbf{Bold} highlights notable gains ($\Delta \geq 1.0$).}
  \label{tab:gqa_results}
  \centering
  \small
  \setlength{\tabcolsep}{3.5pt}
  \resizebox{\linewidth}{!}{%
  \vspace{-7pt}
  \begin{tabular}{@{}lcccccc@{}}
    \toprule
    Method & Attr. & Categ. & Global & Obj. & Rel. & Overall \\
    \midrule
    LLaVA-1.5-7B
      & 67.52 & 51.17 & 62.66 & 87.79 & 53.15 & 61.16 \\
    \rowcolor{lightgray}
    \quad + MIRROR
      & 68.08\sss{+0.56} & \bd{53.61}\sss{+2.44} & \bd{65.82}\sss{+3.16}
      & 88.05\sss{+0.26} & 53.78\sss{+0.63} & 61.93\sss{+0.77} \\
    \addlinespace[4pt]
    LLaVA-1.5-13B
      & 68.85 & 53.79 & 63.29 & 88.30 & 54.30 & 62.47 \\
    \rowcolor{lightgray}
    \quad + MIRROR
      & 69.31\sss{+0.46} & 54.57\sss{+0.78} & \bd{64.33}\sss{+1.04}
      & 88.56\sss{+0.26} & 54.43\sss{+0.13} & 62.81\sss{+0.34} \\
    \midrule
    LLaVA-NeXT-7B
      & 71.18 & 56.73 & 69.34 & 89.42 & 57.38 & 64.74 \\
    \rowcolor{lightgray}
    \quad + MIRROR
      & 71.63\sss{+0.45} & 57.49\sss{+0.76} & \bd{71.47}\sss{+2.13}
      & 89.61\sss{+0.19} & \bd{58.82}\sss{+1.44} & 65.56\sss{+0.82} \\
    \addlinespace[4pt]
    LLaVA-NeXT-13B
      & 71.92 & 57.51 & 70.17 & 89.85 & 58.14 & 65.43 \\
    \rowcolor{lightgray}
    \quad + MIRROR
      & 72.27\sss{+0.35} & 58.14\sss{+0.63} & \bd{71.68}\sss{+1.51}
      & 90.01\sss{+0.16} & \bd{59.21}\sss{+1.07} & 66.05\sss{+0.62} \\
    \bottomrule
  \end{tabular}%
  }
  \vspace{-10pt}
\end{table}

\vspace{-10pt}
\subsubsection{Structured Relational Reasoning (RQ1)}
\label{sec:rq1}

We evaluate MIRROR on structured relational reasoning using GQA. As shown in \cref{tab:gqa_results}, MIRROR yields steady overall gains across all four model configurations: +0.77\% and +0.34\% for LLaVA-1.5 (7B/13B), and +0.82\% and +0.62\% for LLaVA-NeXT (7B/13B), suggesting that its effectiveness generalizes across model scales and architectures.

A cross-category analysis reveals a coherent pattern across all four configurations. The largest gains appear in \textbf{Global} tasks (+1.04\% to +3.16\%), which require integrating relationships among multiple objects (\eg, ``how many red objects are on the table''). \textbf{Category} tasks involving fine-grained discrimination also benefit (+0.63\% to +2.44\%). Interestingly, \textbf{Relation} tasks show more pronounced improvements on the stronger LLaVA-NeXT baselines (7B: +1.44\%, 13B: +1.07\%) than on LLaVA-1.5 (7B: +0.63\%, 13B: +0.13\%). This suggests that a better-aligned feature space allows the model to more effectively leverage its relational reasoning capacity. In contrast, \textbf{Object} and \textbf{Attribute} tasks exhibit only modest gains, as expected: these categories depend more on local visual feature recognition than on inter-concept relational structures.

This consistent pattern across four model configurations, where Global and Relation tasks improve notably while Object tasks remain near-baseline, supports the hypothesis that MIRROR primarily operates at the level of relational geometry, addressing the cross-modal misalignment formalized in \cref{sec:method_formulation} rather than uniformly enhancing feature quality.
Appendix~D visualizes this restructuring, showing how MIRROR drives the visual geometry $D^v$ toward the target $\widehat{D}^v$.
\vspace{-10pt}

\begin{table*}[tb]
  \caption{\textbf{Cross-level reasoning generalization on BLINK.}
    Tasks are grouped by reasoning level: \textbf{L}ow-level visual,
    \textbf{M}id-level spatial, and \textbf{H}igh-level semantic.
    \colorbox{lightgray}{\strut Shaded rows} denote MIRROR-enhanced models;
    $\Delta$ rows show absolute improvement; ``--'' indicates no change.
    \textbf{Bold} highlights notable gains ($\Delta \geq 3.0$) and corresponding scores.}
  \label{tab:blink_results}
  \centering
  \small
  \setlength{\tabcolsep}{3.5pt}
  \resizebox{\textwidth}{!}{%
  \vspace{-7pt}
  \begin{tabular}{@{}l ccc cccc ccccccc c@{}}
    \toprule
    \multirow{2.5}{*}{Method}
      & \multicolumn{3}{c}{Low-level}
      & \multicolumn{4}{c}{Mid-level}
      & \multicolumn{7}{c}{High-level}
      & \multirow{2.5}{*}{Avg.} \\
    \cmidrule(lr){2-4} \cmidrule(lr){5-8} \cmidrule(lr){9-15}
      & \romark{Vis.} & \romark{Refl.} & \romark{Dep.}
      & \romark{Spat.} & \romark{M-v.} & \romark{Jig.} & \romark{Art}
      & \romark{Loc.} & \romark{Cnt.} & \romark{For.} & \romark{IQ}
      & \romark{Sim.} & \romark{Sem.} & \romark{Func.}
      & \\
    \midrule
    LLaVA-1.5-7B
      & 25.6 & 26.9 & 51.6
      & 59.4 & 44.4 & 52.7 & 47.5
      & 55.7 & 44.2 & 25.0 & 21.3 & 47.4 & 30.9 & 33.9
      & 40.5 \\
    \rowcolor{lightgray}
    \quad + MIRROR
      & \bd{30.8} & 26.9 & \bd{54.8}
      & \bd{64.3} & \bd{48.1} & 52.7 & 47.5
      & \bd{60.7} & \bd{48.3} & 25.0 & \bd{24.7} & 47.4 & \bd{36.0} & 33.9
      & 42.9 \\
    \quad\quad $\Delta$
      & \bd{\gn{+5.2}} & -- & \bd{\gn{+3.2}}
      & \bd{\gn{+4.9}} & \bd{\gn{+3.8}} & -- & --
      & \bd{\gn{+4.9}} & \bd{\gn{+4.2}} & -- & \bd{\gn{+3.3}} & -- & \bd{\gn{+5.0}} & --
      & \gn{+2.5} \\
    \addlinespace[4pt]
    LLaVA-1.5-13B
      & 23.8 & 42.5 & 54.8
      & 67.1 & 44.4 & 52.7 & 47.5
      & 42.6 & 44.2 & 28.0 & 22.0 & 47.4 & 33.8 & 28.5
      & 41.4 \\
    \rowcolor{lightgray}
    \quad + MIRROR
      & 25.0 & 42.5 & 54.8
      & 69.2 & 44.4 & 52.7 & 47.5
      & 43.4 & 46.7 & \bd{32.6} & \bd{26.7} & 47.4 & 36.7 & 29.2
      & 42.8 \\
    \quad\quad $\Delta$
      & \gn{+1.2} & -- & --
      & \gn{+2.1} & -- & -- & --
      & \gn{+0.8} & \gn{+2.5} & \bd{\gn{+4.6}} & \bd{\gn{+4.7}} & -- & \gn{+2.9} & \gn{+0.8}
      & \gn{+1.4} \\
    \midrule
    LLaVA-NeXT-7B
      &  8.7 & 40.3 & 54.8
      & 69.9 & 44.4 & 50.0 & 47.0
      & 31.1 & 46.7 & 20.5 & 17.3 & 46.7 & 29.5 & 18.5
      & 36.9 \\
    \rowcolor{lightgray}
    \quad + MIRROR
      & \bd{18.6} & 40.3 & 55.7
      & 69.9 & 44.4 & 52.0 & 48.7
      & 31.1 & 47.5 & \bd{23.5} & \bd{22.0} & 48.2 & 30.2 & \bd{23.9}
      & 39.2 \\
    \quad\quad $\Delta$
      & \bd{\gn{+9.9}} & -- & \gn{+0.8}
      & -- & -- & \gn{+2.0} & \gn{+1.7}
      & -- & \gn{+0.8} & \bd{\gn{+3.0}} & \bd{\gn{+4.7}} & \gn{+1.5} & \gn{+0.7} & \bd{\gn{+5.4}}
      & \gn{+2.4} \\
    \addlinespace[4pt]
    LLaVA-NeXT-13B
      & 14.0 & 39.6 & 48.4
      & 69.2 & 46.6 & 47.3 & 46.2
      & 54.9 & 49.2 & 24.2 &  2.0 & 47.4 & 16.5 &  6.9
      & 35.8 \\
    \rowcolor{lightgray}
    \quad + MIRROR
      & 14.0 & 41.8 & 48.4
      & 69.2 & \bd{50.4} & \bd{53.3} & 46.2
      & \bd{59.8} & 49.2 & \bd{28.8} &  2.7 & 47.4 & \bd{24.5} & \bd{12.3}
      & 38.3 \\
    \quad\quad $\Delta$
      & -- & \gn{+2.2} & --
      & -- & \bd{\gn{+3.8}} & \bd{\gn{+6.0}} & --
      & \bd{\gn{+4.9}} & -- & \bd{\gn{+4.6}} & \gn{+0.7} & -- & \bd{\gn{+7.9}} & \bd{\gn{+5.4}}
      & \gn{+2.5} \\
    \bottomrule
  \end{tabular}%
  }
\end{table*}

\subsubsection{Cross-Level Reasoning Generalization (RQ2)}
\label{sec:rq2}

We next evaluate MIRROR on BLINK to assess whether the improvements generalize across reasoning levels.
As shown in \cref{tab:blink_results}, all four configurations achieve consistent overall gains (+1.4\% to +2.5\%), with clear patterns across BLINK's difficulty spectrum.

\textbf{Mid-level spatial reasoning} tasks show the most consistent improvements, particularly spatial relation understanding (LLaVA-1.5-7B: +4.9\%) and multi-view reasoning (+3.8\%).
These tasks require modeling relative positions and spatial configurations between objects, which directly aligns with the relational geometry strengthened by structural alignment.
\textbf{High-level semantic reasoning} tasks also benefit substantially: localization (+4.9\%), counting (+4.2\%), semantic correspondence (+5.0\% on LLaVA-1.5-7B, +7.9\% on LLaVA-NeXT-13B), and functional reasoning (+5.4\% on both LLaVA-NeXT variants).
Tasks that depend more on holistic perception (\eg, image similarity) remain largely stable.
In contrast, \textbf{low-level visual tasks} such as reflection and depth estimation yield mixed results. This is expected, as these tasks primarily rely on low-level feature extraction rather than the inter-concept relational structure encoded in language.

Overall, these results indicate that MIRROR generalizes across reasoning levels, with gains concentrated in mid- and high-level tasks that involve spatial and semantic correspondences, while low-level visual tasks remain largely unaffected.
\vspace{-20pt}
\subsubsection{General Vision-Language Performance (RQ3)}
\label{sec:rq3}
Finally, we evaluate general VQA tasks to verify that MIRROR preserves overall vision-language understanding. As shown in \cref{tab:main_comparison}, MIRROR maintains or improves performance across all four model configurations.

On VQAv2, LLaVA-1.5-7B achieves a +1.8 gain, while LLaVA-NeXT remains stable (7B: $-$0.1, 13B: +0.1). On POPE, all configurations improve (+0.3 to +0.6), suggesting that better cross-modal distance structures help reduce object hallucination. RealWorldQA remains stable throughout. Notably, LLaVA-NeXT with MIRROR achieves the best results within each scale on both GQA and POPE, suggesting that structural alignment can complement architectural improvements.

\begin{table}[tb]
  \caption{\textbf{General vision-language performance on standard benchmarks.}
    MIRROR preserves or improves general VQA while enhancing relational reasoning.
    Baseline results from~\cite{liu2024improved} and respective publications.
    \colorbox{lightgray}{\strut Shaded rows} denote MIRROR-enhanced models.
    \textbf{Bold} = best within same scale.}
  \label{tab:main_comparison}
  \centering
  \small
  \setlength{\tabcolsep}{5pt}
  \vspace{-7pt}
  \begin{tabular}{@{}llcccc@{}}
    \toprule
    Method & LLM & VQAv2 & GQA & POPE & RWQA \\
    \midrule
    \multicolumn{6}{@{}l}{\textit{7B-class models}} \\
    \addlinespace[2pt]
    InstructBLIP-7B~\cite{dai2024instructblip}
      & Vicuna-7B & --\rlap{$^\text{h}$} & 49.2 & 74.4 & -- \\
    Qwen-VL-Chat~\cite{bai2023qwen}
      & Qwen-7B & 78.2\rlap{$^\dagger$} & 57.5\rlap{$^\dagger$} & -- & -- \\
    LLaVA-1.5-7B~\cite{liu2024improved}
      & Vicuna-7B & 79.3 & 61.2 & 85.7 & 56.6 \\
    \rowcolor{lightgray}
    \quad + MIRROR
      & Vicuna-7B & 81.1 & 61.9 & 86.3 & 56.8 \\
    LLaVA-NeXT-7B~\cite{liu2024llavanext}
      & Mistral-7B & \bd{82.2} & 64.8 & 86.7 & \bd{57.8} \\
    \rowcolor{lightgray}
    \quad + MIRROR
      & Mistral-7B & 82.1 & \bd{65.5} & \bd{87.1} & 57.7 \\
    \midrule
    \multicolumn{6}{@{}l}{\textit{13B-class models}} \\
    \addlinespace[2pt]
    BLIP-2~\cite{li2023blip}
      & Vicuna-13B & 65.0 & 41.0 & 85.3 & -- \\
    InstructBLIP-13B~\cite{dai2024instructblip}
      & Vicuna-13B & --\rlap{$^\text{h}$} & 49.5 & 78.9 & -- \\
    LLaVA-1.5-13B~\cite{liu2024improved}
      & Vicuna-13B & 81.3 & 62.5 & 85.5 & 56.1 \\
    \rowcolor{lightgray}
    \quad + MIRROR
      & Vicuna-13B & 81.7 & 62.8 & 86.0 & \bd{56.4} \\
    LLaVA-NeXT-13B~\cite{liu2024llavanext}
      & Vicuna-13B & 82.8 & 65.4 & 86.2 & 56.2 \\
    \rowcolor{lightgray}
    \quad + MIRROR
      & Vicuna-13B & \bd{82.9} & \bd{66.1} & \bd{86.5} & 56.1 \\
    \bottomrule
  \end{tabular}

  \smallskip
  {\footnotesize\raggedright
   $^\dagger$\,Eval-set images observed during pre-training.\quad
   $^\text{h}$\,Held-in during instruction tuning; zero-shot score not applicable.\quad
   RWQA\,=\,RealWorldQA.\par}
   \vspace{-17pt}
\end{table}

\vspace{-10pt}
\subsection{Ablation and Design Analysis}
\label{sec:analysis}
To understand the efficacy of MIRROR and its underlying mechanisms, we conduct comprehensive ablations and validate our specific design choices.

\vspace{-10pt}
\subsubsection{Component Ablation}
\label{sec:component_ablation}

\cref{tab:overall_ablation} examines the contribution of each
component via progressive integration and leave-one-out removal.
Cumulatively, layer decoupling and head selection together
improve the baseline, and token filtering provides a further gain.
The leave-one-out results reveal a clear hierarchy of importance.
Layer decoupling is indispensable: removing it causes training
to diverge, as the SI-GW gradients corrupt the model's attention
distributions.
Removing head selection also leads to severe degradation, confirming that noisy cross-modal couplings undermine the geometric signal.

\begin{table}[tb]
  \centering
  \small
  \caption{\textbf{Component ablation.}
    Top: cumulative addition. Bottom: leave-one-out.
    $\dagger$: training diverges.}
  \label{tab:overall_ablation}
  \setlength{\tabcolsep}{4pt}
  \vspace{-7pt}
  \begin{tabular}{@{}ccc cc@{}}
    \toprule
    \makecell{Layer\\Decoup.} & \makecell{Head\\Select.} & \makecell{Token\\Filter.}
      & GQA & BLINK \\
    \midrule
     & & 
      & 61.2 & 40.5 \\
    \multicolumn{5}{@{}l}{\textit{Progressive integration}} \\
    \checkmark & \checkmark & 
      & 61.7\sss{+0.5} & 41.9\sss{+1.4} \\
    \rowcolor{lightgray}
    \checkmark & \checkmark & \checkmark
      & \textbf{61.9}\sss{+0.2} & \textbf{42.9}\sss{+1.0} \\
    \midrule
    \multicolumn{5}{@{}l}{\textit{Leave-one-out}} \\
    \xmark &  & 
      & \multicolumn{2}{c}{diverged$\dagger$} \\
     & \xmark & 
      & 59.8\sss{-2.1} & 35.6\sss{-7.3} \\
    \bottomrule
  \end{tabular}
\end{table}

\subsubsection{Design Choice Validation}
\label{sec:design_validation}

We empirically justify the specific architectural instantiations of each component.

\begin{figure}[tb]
  \centering
  \begin{minipage}[b]{0.66\linewidth}
    \centering
    {\small\itshape Layer-wise}\\[4pt]
    \begin{subfigure}[b]{0.47\linewidth}
      \centering
      \includegraphics[width=\linewidth]{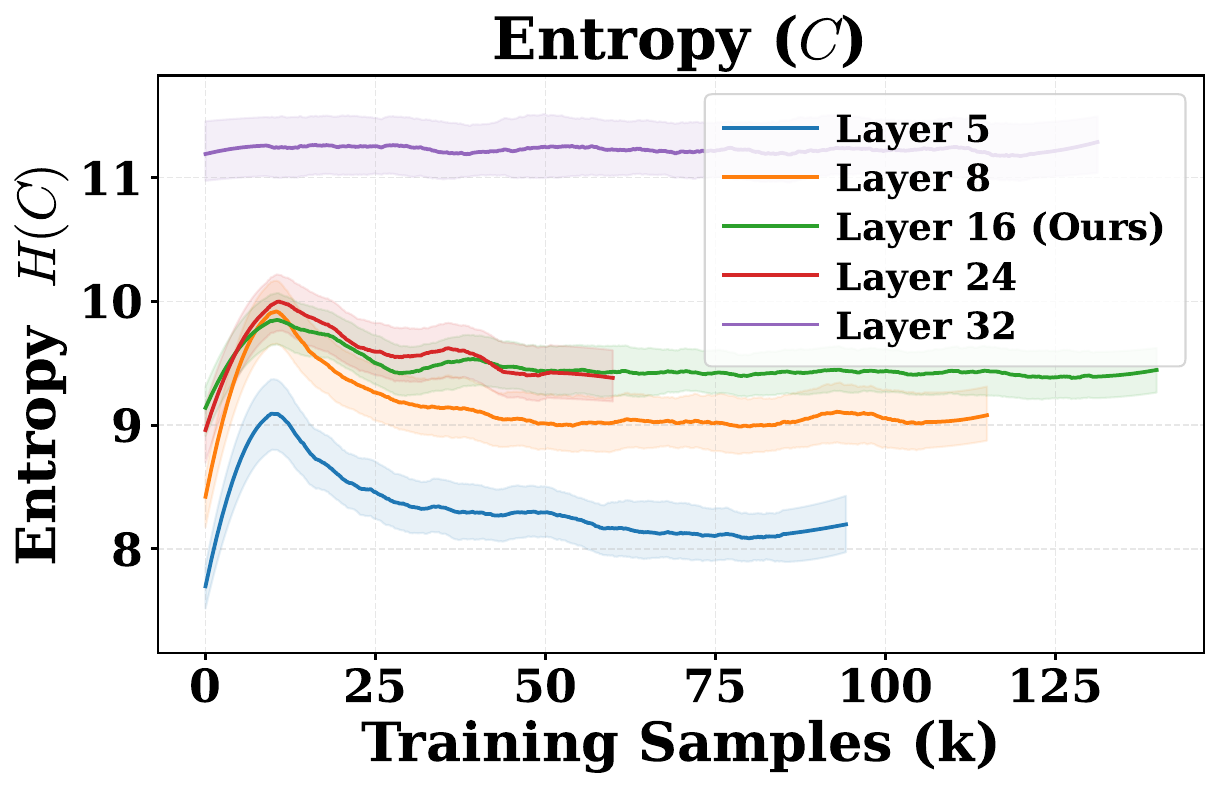}
      \caption{Entropy}
      \label{fig:layer_entropy}
    \end{subfigure}
    \hfill
    \begin{subfigure}[b]{0.49\linewidth}
      \centering
      \includegraphics[width=\linewidth]{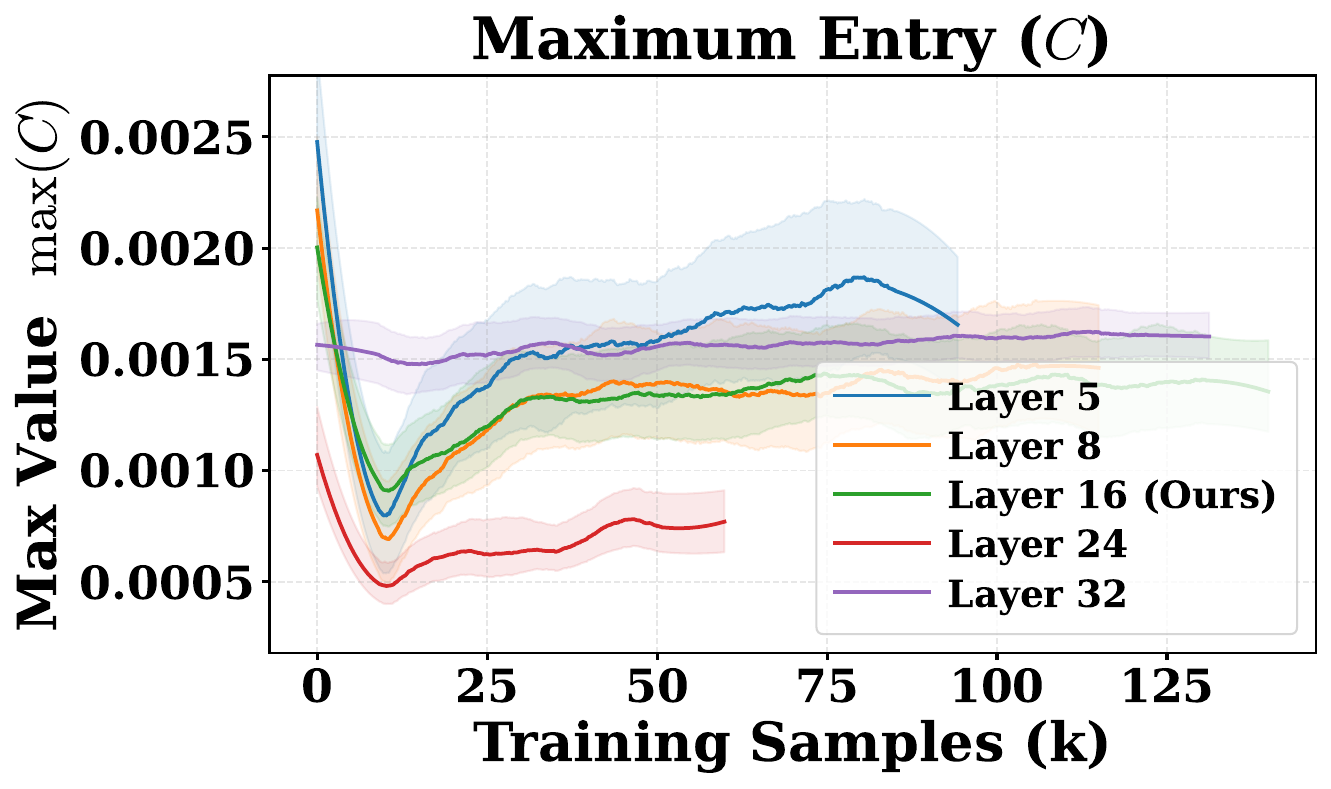}
      \caption{Max attention}
      \label{fig:layer_max}
    \end{subfigure}
  \end{minipage}
  \hfill
  \begin{minipage}[b]{0.32\linewidth}
    \centering
    {\small\itshape Head-wise}\\[4pt]
    \begin{subfigure}[b]{\linewidth}
      \centering
      \includegraphics[width=\linewidth]{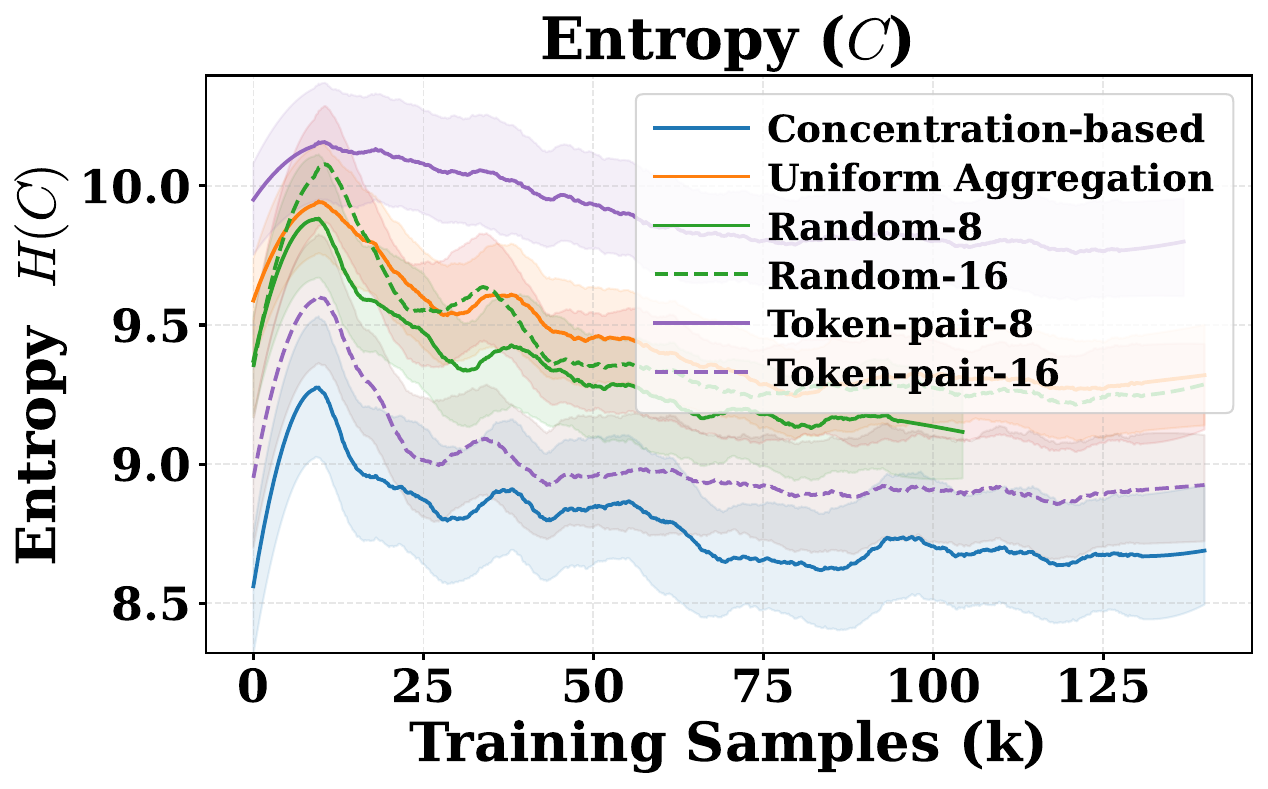}
      \caption{Entropy}
      \label{fig:head_entropy}
    \end{subfigure}
  \end{minipage}
  \caption{Cross-attention analysis from two perspectives.
    \textbf{(a,b) Layer-wise:} intermediate layers achieve balanced
    information richness and peak concentration.
    \textbf{(c) Head-wise (Layer~16):} low-entropy heads yield
    clearer cross-modal alignment.}
  \label{fig:attention_analysis}
\end{figure}
\vspace{-10pt}

\paragraph{Coupling layer selection.}
\cref{fig:attention_analysis}(a,b) presents the layer-wise
cross-attention statistics. Intermediate layers (\eg, Layer~16)
achieve an effective balance between information diversity
(moderate entropy) and signal concentration (high peak attention
values), whereas shallow layers exhibit limited diversity and
deep layers tend to over-abstract, losing discriminative power.

\begin{wraptable}{r}{0.34\linewidth}
  \centering
  \small
  \vspace{-20pt}
  \caption{Sensitivity to the SI-GW weight $\lambda$ on GQA.}
  \label{tab:lambda_sensitivity}
  \setlength{\tabcolsep}{6pt}
  \begin{tabular}{@{}lc@{}}
    \toprule
    $\lambda$ & GQA \\
    \midrule
    $0$ (no SI-GW)              & 60.8 \\
    $10^{-4}$                   & 60.8 \\
    $10^{-3}$                   & 61.5 \\
    $\mathbf{2{\times}10^{-3}}$ & \textbf{61.9} \\
    $5{\times}10^{-3}$          & 61.8 \\
    $10^{-2}$                   & 59.7 \\
    $10^{-1}$                   & 49.3 \\
    \bottomrule
  \end{tabular}
  \vspace{-20pt}
\end{wraptable}

\vspace{-10pt}
\paragraph{Head selection strategy.}
\cref{fig:attention_analysis}(c) compares different head
aggregation strategies. Selecting the top-$k$ lowest-entropy
heads yields the sharpest cross-modal correspondences, while
uniform or random aggregation dilutes the concentrated patterns
that are critical for reliable structural guidance.
\vspace{-10pt}

\paragraph{Token filtering effectiveness.}
The position-aware filtering mechanism effectively down-weights
non-semantic tokens, including function words and early-position
context tokens (\eg, system prompts), thereby isolating the
semantically meaningful tokens required for constructing an
accurate relational geometry.

\paragraph{SI-GW weight $\lambda$.}
\cref{tab:lambda_sensitivity} reports the effect of the regularization
strength $\lambda$. The method is robust across
$\lambda\in[10^{-3},5{\times}10^{-3}]$ and performs best at
$\lambda{=}2{\times}10^{-3}$; as $\lambda$ increases further, the SI-GW
term progressively overrides the language-modeling objective and accuracy
declines.

\vspace{-10pt}
\section{Conclusion}
\label{sec:conclusion}
\vspace{-5pt}
We have identified a critical gap in multimodal alignment: existing methods align individual concepts but fail to preserve relational structures essential for compositional reasoning. To address this, we propose MIRROR, which applies a lightweight variant of Gromov--Wasserstein distance as a geometric regularizer to explicitly align distance structures between visual and textual representations. Through principled hierarchical design strategies, MIRROR induces systematic geometric restructuring of visual space, transforming it to mirror language's relational geometry. Extensive experiments demonstrate that this approach enhances relational reasoning across multiple task levels while fully preserving general vision-language capabilities without additional inference cost. These results establish geometric structure alignment as a promising direction for robust multimodal understanding.


\section*{Acknowledgements}
The authors would like to thank the anonymous reviewers for their valuable comments and suggestions. This work was partially supported by the National Key Research and Development Program of
China (No. 2021YFA1000900), and the National Natural Science Foundation of China (No. 62272432,
No. 62432016).

%
%
\bibliographystyle{splncs04}
\bibliography{main}
\end{document}